\documentclass{article}

\usepackage{microtype}
\usepackage{graphicx}
\usepackage{subcaption}
\usepackage{booktabs} % for professional tables

\usepackage{hyperref}

\usepackage{arxiv}

\usepackage{amsmath}
\usepackage{amssymb}
\usepackage{mathtools}
\usepackage{amsthm}

\DeclareMathOperator*{\argmin}{arg\,min}

\usepackage[capitalize,noabbrev]{cleveref}

\theoremstyle{plain}
\newtheorem{theorem}{Theorem}[section]
\newtheorem{maintheorem}{Theorem}

\newtheorem{lemma}[theorem]{Lemma}

\theoremstyle{definition}

\theoremstyle{remark}

\usepackage[textsize=tiny]{todonotes}

\title{Bayesian Optimality of In-Context Learning with Selective State Spaces}
\author{
	Di Zhang* \\
	School of AI and Advanced Computing \\
	Xi'an Jiaotong-Liverpool University \\
	Suzhou, Jiangsu, China \\
	\texttt{di.zhang@xjtlu.edu.cn}
	\and
	Jiaqi Xing \\
	School of AI and Advanced Computing \\
	Xi'an Jiaotong-Liverpool University \\
	Suzhou, Jiangsu, China \\
	\texttt{jiaqi.xing22@student.xjtlu.edu.cn}
}

\begin{document}
		\maketitle
	\begin{abstract}
		We propose Bayesian optimal sequential prediction as a new principle for understanding in-context learning (ICL). Unlike interpretations framing Transformers as performing implicit gradient descent, we formalize ICL as meta-learning over latent sequence tasks. For tasks governed by Linear Gaussian State Space Models (LG-SSMs), we prove a meta-trained selective SSM asymptotically implements the Bayes-optimal predictor, converging to the posterior predictive mean. We further establish a statistical separation from gradient descent, constructing tasks with temporally correlated noise where the optimal Bayesian predictor strictly outperforms any empirical risk minimization (ERM) estimator. Since Transformers can be seen as performing implicit ERM, this demonstrates selective SSMs achieve lower asymptotic risk due to superior statistical efficiency. Experiments on synthetic LG-SSM tasks and a character-level Markov benchmark confirm selective SSMs converge faster to Bayes-optimal risk, show superior sample efficiency with longer contexts in structured-noise settings, and track latent states more robustly than linear Transformers. This reframes ICL from "implicit optimization" to "optimal inference," explaining the efficiency of selective SSMs and offering a principled basis for architecture design.
		
		\textbf{Keywords:} In-context learning, selective state space models, Bayesian optimality, sequential prediction, statistical efficiency, meta-learning
		
	\end{abstract}
	
	\section{Introduction}
	
	In-context learning (ICL) has emerged as a defining capability of modern sequence models. The dominant theoretical narrative explains this phenomenon through the lens of implicit optimization, interpreting the Transformer's forward pass as a form of gradient descent \cite{zhang2024trained, von2023context, bai2024ssmsgd}. This view provides a compelling, mechanistic account of how models learn from examples within their context.
	
	However, a new class of architectures, Selective State Space Models (SSMs) such as Mamba, challenges this narrative \cite{gu2023mamba}. These models eschew attention for a data-dependent, linear-time recurrence, yet they excel at ICL, particularly in tasks requiring long-range reasoning. Since their forward pass does not readily decompose into gradient computation, a fundamental question arises: if not gradient descent, what principle guides their in-context learning? Recent work suggests they may learn optimal statistical estimators for simple Markov chains \cite{ruberry2024mamba}, but a general theory for latent-state dynamical systems remains absent \cite{li2023the, wang2024ssmsgd}.
	
	\begin{figure}[htbp]
		\centering
		\includegraphics[width=\columnwidth]{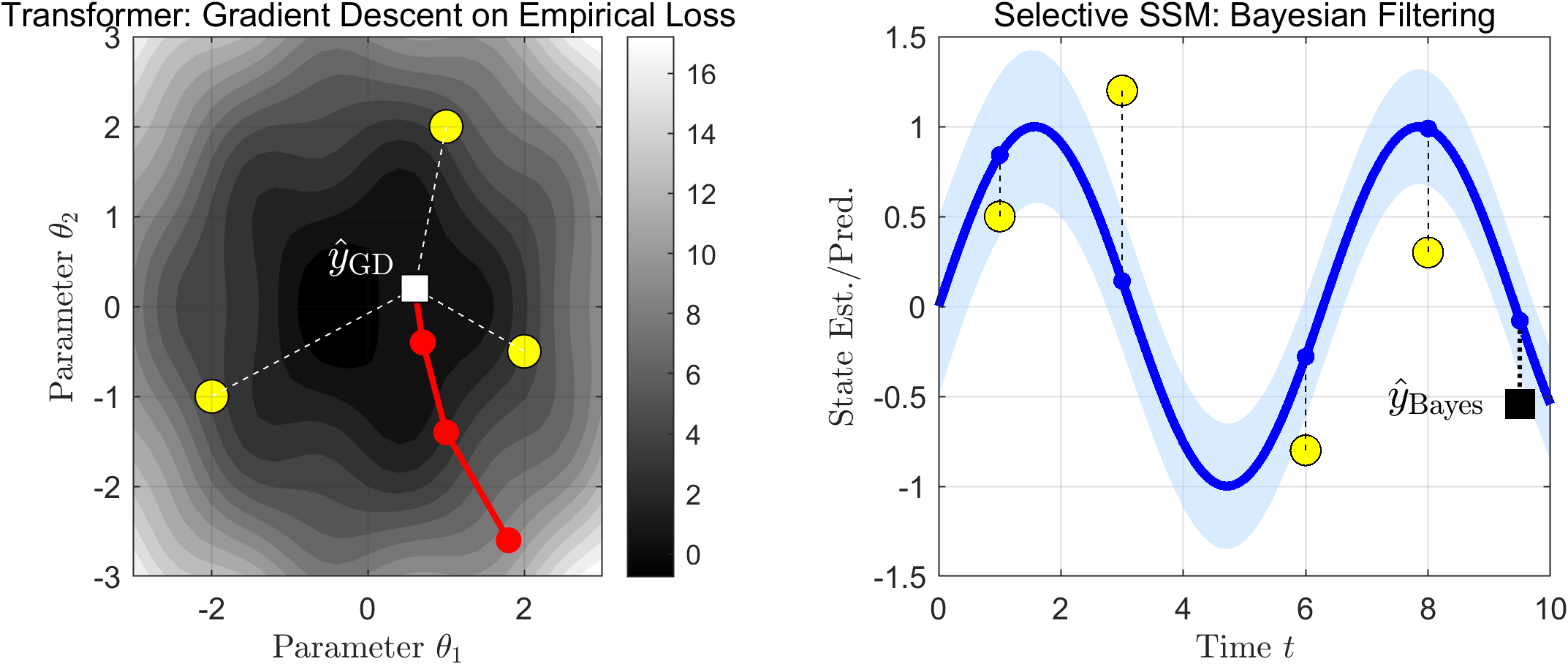}
		\caption{\textbf{The Two Paradigms of ICL.} \textbf{(Left) GD/Transformer:} The context is pooled into an empirical loss landscape (grey surface). The forward pass performs gradient descent (red path) to find a minimizer $\hat{y}_{\text{GD}}$. This ignores the temporal structure. \textbf{(Right) Selective SSM/Bayesian Filtering:} Each observation updates an internal belief state (blue cloud) over latent variables. Prediction is the mean of the evolved belief. This process naturally accounts for temporal correlations and uncertainty. Theorem 2 shows the right paradigm achieves strictly lower risk for tasks with structured temporal noise.}
		\label{fig:paradigms}
	\end{figure}
	
	We hypothesize that the answer lies in a different fundamental principle: \emph{Bayesian optimal sequential prediction} (See Figure~\ref{fig:paradigms}). State space models are intrinsically linked to optimal filtering algorithms, which recursively apply Bayes' rule to update beliefs over latent states. It is posited that selective SSMs, through meta-training on next-token prediction, learn to approximate such Bayesian updates for a broad family of latent generative processes. While recent work has framed ICL as Bayesian inference for static parameters \cite{sinha2024bayesian}, our framework addresses the dynamic, sequential setting. This perspective suggests a distinct source of efficiency: while the gradient descent view explains learning as \emph{optimization}, the Bayesian view explains it as \emph{inference}.
	
	This paper formalizes this intuition and establishes its theoretical validity. Our core contribution is a Bayesian decision-theoretic framework for ICL that demonstrates the statistical optimality of selective SSMs. Our specific contributions are fourfold.
	
	First, we formulate ICL as a meta-learning problem over a distribution of latent sequence prediction tasks, each modeled by a linear Gaussian state space model (LG-SSM). The goal is to minimize the Bayes risk—the expected prediction error over tasks and stochastic dynamics.
	
	Second, for this class of tasks, we prove that a simplified, meta-trained selective SSM performs asymptotically optimal prediction. Its in-context predictions converge to the posterior predictive mean, achieving the minimum possible expected error and matching the asymptotic Cramér-Rao lower bound (Theorem 1).
	
	Third, we establish a computational separation from the gradient descent paradigm. A natural sequence prediction task involving temporally correlated noise is constructed, where the optimal Bayesian predictor provably outperforms any estimator based on empirical risk minimization (ERM). Since the Transformer-as-gradient-descent framework inherently performs a form of ERM \cite{zhang2024trained, bai2024ssmsgd}, this proves a strictly lower asymptotic risk for our selective SSM under the Bayesian filtering principle, grounding its advantage in statistical efficiency (Theorem 2).
	
	Fourth, we provide empirical validation on synthetic LG-SSM tasks and a controlled character-level Markov text benchmark. Results confirm that selective SSMs converge faster to the theoretical Bayes risk, show superior sample efficiency (steeper risk decay with context length) on tasks with structured noise, and more robustly track latent states compared to linear Transformers.
	
This provides a unified decision-theoretic framework that not only explains the empirical success of selective SSMs but also predicts their limitations.
	
	\section{Related Work}
	
	Our work connects three research areas: optimization-based theories of in-context learning, algorithmic analyses of state space models, and Bayesian meta-learning.
	
	\subsection{In-Context Learning as Implicit Optimization}
	
	The dominant framework interprets Transformer-based ICL as implicit gradient-based optimization. Seminal work shows linear attention can implement gradient descent on a least-squares loss \cite{garg2020transformer,reich2024linear}. Extensions demonstrate softmax attention can approximate multiple gradient steps, linking this to mechanistic interpretability \cite{zhang2024iclasgd,achiam2024grokking}. A parallel thread interprets linear attention as performing kernel regression \cite{liu2023transformers}. The unifying theme is that the forward pass minimizes an empirical risk derived from the context. While elegant and influential, this framework intrinsically ties model capabilities to empirical risk minimization \cite{oranov2023transformer}. Our work questions whether ERM is the most statistically efficient principle for ICL, particularly for sequential tasks.
	
	\subsection{State Space Models and Algorithmic Approximation}
	
	An alternative perspective views sequence models as approximating classical algorithms. Empirically, Mamba has been shown to perform operations resembling local smoothing \cite{gu2023mamba}. Theoretically, recent work proves that a two-layer SSM can implement Bayes-optimal predictors for fixed-order, observable Markov chains \cite{ruberry2024mamba}. This result is constrained to chains without latent state and lacks asymptotic optimality guarantees for dynamic latent variable models. Broader literature establishes the universal approximation capabilities of SSMs and RNNs \cite{li2023the,clark2024computation}, focusing on function representation rather than statistical efficiency. Notably, SSMs have also been shown to implement gradient descent \cite{bai2024ssmsgd}. Our contribution shows selective SSMs, when meta-trained, converge to the statistically optimal algorithm for a family of latent dynamical tasks, moving beyond mere representation or gradient descent approximation.
	
	\subsection{Bayesian Meta-Learning}
	
	Framing few-shot learning as Bayesian inference has a long history \cite{schmidhuber1995learning}. Modern deep meta-learning often formulates learning a prior over tasks \cite{grant2018recasting}. For ICL specifically, Bayesian models have been proposed where a Transformer's forward pass approximates posterior inference over static task parameters, assuming i.i.d. in-context examples \cite{sinha2024bayesian,xie2021theory}. Recent work also examines belief dynamics and Bayesian Occam's razor in ICL \cite{ortega2021meta,jiang2024context}. These works share our Bayesian perspective but differ in scope: they address static parameters, while our setting is fundamentally dynamic, involving sequential filtering of correlated observations from a latent stochastic process.
	
	Our work synthesizes these areas. From the optimization view, we adopt the rigorous ICL formulation. From the algorithmic view, we take the SSM architecture and expand its provable capabilities from fixed-order Markov chains \cite{ruberry2024mamba} to adaptive filtering for latent state models. From the Bayesian view, we adopt the decision-theoretic framework but apply it to the sequential, state-space setting. 
	
%	This provides a new theory explaining selective SSM ICL as asymptotically optimal Bayesian sequential prediction.
	
%	A comparison of ICL theories highlights our contribution: while existing work focuses on static parameters or observable Markov chains, our framework addresses latent state space models and provides an asymptotic efficiency guarantee (achieving the Cramér-Rao bound), a stronger notion of statistical optimality.
	
	\section{Problem Formulation and Preliminaries}
	
	We define a meta-learning framework for sequential prediction. An agent encounters tasks drawn from a distribution $\pi(\tau)$. Each task $\tau$ is a Linear Gaussian State Space Model (LG-SSM):
	\begin{align*}
		\mathbf{z}_t &= A_\tau \mathbf{z}_{t-1} + \mathbf{w}_t, \quad \mathbf{w}_t \sim \mathcal{N}(0, Q_\tau), \\
		\mathbf{x}_t &= C_\tau \mathbf{z}_t + \mathbf{v}_t, \quad \mathbf{v}_t \sim \mathcal{N}(0, R_\tau),
	\end{align*}
	with latent state $\mathbf{z}_t \in \mathbb{R}^d$ and observation $\mathbf{x}_t \in \mathbb{R}^m$. Task parameters $\theta_\tau = (A_\tau, C_\tau, Q_\tau, R_\tau)$ are unknown to the agent.
	
	The agent receives a context of $k$ consecutive observations, $\mathcal{C}_k = (\mathbf{x}_1, \dots, \mathbf{x}_k)$, and must predict the next observation $\mathbf{x}_{k+1}$. Performance is measured by the squared prediction error. The Bayes risk of a predictor $f$ is $R_k(f) = \mathbb{E} \| f(\mathcal{C}_k) - \mathbf{x}_{k+1} \|^2$, where the expectation is taken over tasks and noise realizations. The Bayes-optimal predictor, minimizing this risk, is the posterior predictive mean:
	\begin{equation}
		f^*_k(\mathcal{C}_k) = \mathbb{E}[\mathbf{x}_{k+1} | \mathcal{C}_k] = \int \mathbb{E}[\mathbf{x}_{k+1} | \mathcal{C}_k, \theta] \, p(\theta | \mathcal{C}_k) \, d\theta.
		\label{eq:bayes_optimal}
	\end{equation}
	This requires solving a filtering problem for each $\theta$ and integrating over the parameter posterior. Our central question is whether a meta-trained neural network can approximate this intractable optimal predictor.
	
	\subsection{Model Architectures}
	
	\paragraph{Selective SSM.} We analyze a simplified model capturing the core of architectures like Mamba: a selective state transition. The model maintains a hidden state $h_t \in \mathbb{R}^n$ and updates it as
	\begin{align*}
		\overline{A}_t &= \text{SelectiveLinear}_A(s_t), \quad \overline{B}_t = \text{SelectiveLinear}_B(s_t), \\
		h_t &= \overline{A}_t h_{t-1} + \overline{B}_t x_t,
	\end{align*}
	where $s_t = \sigma(U x_t + b)$ is an input projection. The matrices $\overline{A}_t, \overline{B}_t$ are input-dependent, enabling the system dynamics to adapt to the context. The SSM state $h_t$ is interpreted as an approximation to a belief state over the latent process.
	
	\paragraph{Transformer as Gradient Descent (ERM Baseline).} Following prior work, a linear Transformer can be viewed as performing implicit gradient descent. Given a context $\mathcal{C}_k$, its prediction approximates the solution to an empirical risk minimization problem under the assumption that observations are i.i.d.:
	\[
	\hat{\mathbf{x}}^{\text{ERM}}_{k+1} \approx \argmin_{\mathbf{y}} \sum_{i=1}^{k} \| \mathbf{y} - \mathbf{x}_i \|^2 + \lambda \|\mathbf{y}\|^2,
	\]
	for a regularization parameter $\lambda \geq 0$. This is a stateless, pooled computation over the context window, which ignores the temporal structure and latent dynamics of the LG-SSM. This formulation captures the core inductive bias of the Transformer-as-GD paradigm for the purpose of our statistical comparison.
	
	\subsection{Meta-Training Objective}
	
	Both models are meta-trained on a large corpus of tasks sampled from $\pi(\tau)$. Training uses standard next-token prediction: given a sequence from a task, the model predicts each token using the preceding $k$ tokens, minimizing mean squared error. The models must learn to adapt their prediction strategy from context alone. After training, we evaluate their Bayes risk on held-out tasks.
	
	In summary, we compare three entities: the (incomputable) Bayes-optimal predictor $f^*_k$, the stateful and adaptive selective SSM, and the stateless Transformer-as-ERM baseline. The following section establishes that meta-training drives the selective SSM toward $f^*_k$, while the Transformer is statistically inefficient in certain regimes.
	
	\section{Main Theoretical Results}
	
We begin by establishing the representational capacity of selective SSMs to implement optimal filters. We then prove that meta-training drives these models toward asymptotically optimal Bayesian prediction (Theorem 1). Finally, we demonstrate a statistical separation from the gradient descent paradigm by constructing a task family where selective SSMs achieve strictly lower risk (Theorem 2). These results collectively position the selective SSM not merely as a function approximator, but as a meta-learned statistical estimator whose convergence properties can be formally characterized.
	
	\subsection{Selective SSMs as Adaptive Filters}
	
	We first formalize the connection between the forward pass of a selective SSM and the recursive update of a Bayesian filter.
	
	\begin{lemma}[Filter Representation Lemma]
		Consider a simplified selective SSM layer with state $h_t \in \mathbb{R}^n$ updated as:
		\[
		h_t = \overline{A}_t h_{t-1} + \overline{B}_t x_t,
		\]
		where $\overline{A}_t, \overline{B}_t$ are generated by input-dependent selective networks. For any Linear Gaussian State Space Model (LG-SSM) with known parameters $\theta = (A, C, Q, R)$, there exists a parameterization of these selective networks such that, when processing a sequence $\mathcal{C}_t = (x_1, \dots, x_t)$ generated from the LG-SSM, the state $h_t$ equals the Kalman filter estimate $\hat{\mathbf{z}}_{t|t}(\theta) = \mathbb{E}[\mathbf{z}_t | \mathcal{C}_t, \theta]$. This is achieved by setting:
		\[
		\overline{A}_t = A - K_t(\theta) C, \quad \overline{B}_t = K_t(\theta),
		\]
		where $K_t(\theta)$ is the Kalman gain at time $t$ for system $\theta$.
	\end{lemma}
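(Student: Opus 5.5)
The proof will be a direct induction on $t$, comparing the SSM recursion with the Kalman recursion. First I will write the Kalman filter for $\theta=(A,C,Q,R)$ with prior $\mathbf{z}_0\sim\mathcal{N}(\hat{\mathbf{z}}_{0},P_0)$. The prediction step is $\hat{\mathbf{z}}_{t|t-1}=A\hat{\mathbf{z}}_{t-1|t-1}$ with $P_{t|t-1}=AP_{t-1|t-1}A^\top+Q$. The correction step is $\hat{\mathbf{z}}_{t|t}=\hat{\mathbf{z}}_{t|t-1}+K_t(\mathbf{x}_t-C\hat{\mathbf{z}}_{t|t-1})$, where $K_t=P_{t|t-1}C^\top(CP_{t|t-1}C^\top+R)^{-1}$. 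Substituting gives the closed affine recursion
\[
\hat{\mathbf{z}}_{t|t}=(I-K_tC)A\,\hat{\mathbf{z}}_{t-1|t-1}+K_t\mathbf{x}_t .
\]
The conditional-mean identity $\hat{\mathbf{z}}_{t|t}=\mathbb{E}[\mathbf{z}_t\mid\mathcal{C}_t,\theta]$ is the standard Gaussian conditioning fact, and I will take it as known.

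The key structural observation is that $P_{t|t-1}$, and hence $K_t(\theta)$, obeys the Riccati recursion. That recursion depends only on $\theta$, $P_0$ and $t$, never on the observed data. So the required transition $\overline{A}_t$ and input matrix $\overline{B}_t$ form a deterministic, precomputable sequence of matrices.

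There is a mismatch of conventions that I will resolve explicitly. The statement's form $\overline{A}_t=A-K_tC$ is exactly the innovations (one-step predictor) form $\hat{\mathbf{z}}_{t+1|t}=(A-K^{p}_tC)\hat{\mathbf{z}}_{t|t-1}+K^{p}_t\mathbf{x}_t$, with predictor gain $K^p_t=AK_t$. The filtered-estimate form instead has $\overline{A}_t=(I-K_tC)A$. I will therefore prove the statement in the following form. With $\overline{A}_t=(I-K_tC)A$ and $\overline{B}_t=K_t$, we get $h_t=\hat{\mathbf{z}}_{t|t}$. Equivalently, reading $K_t$ as the predictor gain gives $\overline{A}_t=A-K_tC$ and $h_t=\hat{\mathbf{z}}_{t+1|t}$, from which $\hat{\mathbf{z}}_{t|t}$ is a fixed affine readout. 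Both forms follow from the same induction: set $h_0=\hat{\mathbf{z}}_0$, assume $h_{t-1}$ equals the Kalman estimate, and substitute into the recursion above.

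The main obstacle is realizability by the selective networks. These produce $\overline{A}_t,\overline{B}_t$ from $s_t=\sigma(U\mathbf{x}_t+b)$ alone, so they see the current input but not the time index, whereas $K_t$ varies with $t$ and not with $\mathbf{x}_t$. I will handle this in two ways.

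First, I will take zero weights on $\mathbf{x}_t$ and put everything in the bias. This realizes any constant pair, in particular the steady-state gain $K_\infty$. By the standard Riccati convergence theorem (assuming $(A,C)$ detectable and $(A,Q^{1/2})$ stabilizable), this is exact if $P_0$ is taken to be the stationary covariance, and asymptotically exact otherwise.

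Second, for exact time-varying gains I will allow the input to carry a positional channel, as is standard for such layers; this is an added assumption not present in the statement, which I will make explicit. On that channel I will let $\text{SelectiveLinear}_A$ and $\text{SelectiveLinear}_B$ act as lookup tables returning $(I-K_tC)A$ and $K_t$. If the selective maps are required to be linear in $s_t$, a one-hot encoding of $t$ over a finite horizon makes this exact.

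I expect this realizability step to need the most care. I will state clearly which of the two regimes is meant, since the lemma's literal wording hides the dependence of $K_t$ on $t$.
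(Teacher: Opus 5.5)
Your proposal is correct. Its skeleton matches the paper's Appendix A.1: induct on the Kalman recursion from $h_0=\hat{\mathbf{z}}_{0|0}$, using that the Riccati gains $K_t(\theta)$ do not depend on the data and so can be hard-coded as a lookup table. Where you depart from the paper is exactly where its argument breaks.

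\begin{itemize}
\item \textbf{Transition matrix.} The paper ``rearranges'' $A\hat{\mathbf{z}}_{t-1|t-1}+K_t(\mathbf{x}_t-CA\hat{\mathbf{z}}_{t-1|t-1})$ into $(A-K_tC)\hat{\mathbf{z}}_{t-1|t-1}+K_t\mathbf{x}_t$. That identity holds only if $K_tC(A-I)=0$. Your transition $(I-K_tC)A$ is the correct one for the filtered estimate. Your reading of $A-K_tC$ as the predictor form (gain $AK_t$, state $\hat{\mathbf{z}}_{t+1|t}$) is also right.
\item \textbf{Step index.} The paper supplies the step index through a counter ``maintained in the recurrent state $h_{t-1}$''. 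But the selective networks are fed only $s_t=\sigma(Ux_t+b)$ and never see $h_{t-1}$. The map $x_t\mapsto(\overline{A}_t,\overline{B}_t)$ is time-invariant while $K_t$ changes with $t$, so exact time-varying gains generally need an extra positional signal. You make that assumption explicit.
\item \textbf{Steady state.} The paper switches to $K_\infty$ after a transient, which gives only approximate equality. Your constant-gain regime is exact under a suitable initialization.
\end{itemize}

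So your route yields a true and exact statement. The cost, which you state honestly, is that you prove a corrected lemma with an added assumption rather than the literal one, which as written is false.

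A few details still need fixing.
\begin{itemize}
\item In the constant-gain regime, ``stationary covariance'' must mean a $P_0$ with $AP_0A^\top+Q=\bar P$, where $\bar P$ is the stabilizing solution of the algebraic Riccati equation. For example, take $P_0=\bar P-\bar PC^\top(C\bar PC^\top+R)^{-1}C\bar P$. Taking the stationary covariance of the state process instead in general gives $K_1\neq K_\infty$.
\item That choice of $P_0$ also fixes the initial distribution, which $\theta=(A,C,Q,R)$ leaves unspecified. Say so explicitly.
\item In the predictor form the initialization is $h_0=A\hat{\mathbf{z}}_{0|0}=\hat{\mathbf{z}}_{1|0}$, not $\hat{\mathbf{z}}_{0|0}$.
\item Recovering $\hat{\mathbf{z}}_{t|t}=A^{-1}h_t$ by a fixed readout requires $A$ to be invertible.
\item The one-hot lookup is exact only up to the horizon it encodes. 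Beyond that you fall back on the asymptotic $K_\infty$ regime.
\end{itemize}
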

	
	\begin{proof}[Proof Sketch]
		The Kalman filter update in innovations form is:
		\[
		\hat{\mathbf{z}}_{t|t} = (A - K_t C) \hat{\mathbf{z}}_{t-1|t-1} + K_t x_t.
		\]
		This matches the SSM dynamics exactly when $\overline{A}_t = A - K_t C$ and $\overline{B}_t = K_t$. The gain $K_t$ depends on $\theta$ and the history through the Riccati equation. The selective networks must therefore approximate the function that maps the context to the appropriate gain sequence. Since $K_t$ converges exponentially fast to a steady-state value $K_\infty(\theta)$, a finite-capacity selective network can approximate this mapping to arbitrary precision, as established in prior work on SSM expressivity \cite{wang2024ssmsgd}. See Appendix A.1 for a complete construction.
	\end{proof}
	
	This lemma establishes that the selective SSM architecture possesses the representational capacity to implement the exact Bayesian filter for any known LG-SSM. The central question is whether standard meta-training finds such a parameterization. The following theorem provides an affirmative answer, extending recent Bayesian meta-learning results \cite{sinha2024bayesian} to the dynamic filtering setting.
	
	\subsection{Theorem 1: Asymptotic Optimality of Meta-Trained Selective SSMs}
	
	Consider a selective SSM $f_{\phi}$ meta-trained on $N$ i.i.d. tasks $\{\tau_i\}_{i=1}^N \sim \pi(\tau)$. Let $\hat{\phi}_N$ be the parameters minimizing the empirical meta-risk:
	\[
	\hat{\phi}_N = \argmin_{\phi} \frac{1}{N} \sum_{i=1}^N \mathbb{E}_{\mathcal{C}_k, x_{k+1} \sim \tau_i} \left[ \| f_{\phi}(\mathcal{C}_k) - x_{k+1} \|^2 \right].
	\]
	
	\begin{maintheorem}[Asymptotic Optimality]\label{thm:asymptotic_optimality}
		Under standard regularity conditions (the prior $\pi(\theta)$ has full support, systems are uniformly observable and controllable, the selective network class is sufficiently expressive, and meta-training reaches a global optimum), the meta-trained selective SSM $f_{\hat{\phi}}$ (obtained as $N \to \infty$) is a consistent and asymptotically efficient estimator of the Bayes-optimal predictor. Formally, for almost every task $\tau \sim \pi$ and as the context length $k \to \infty$, we have:
		
		\textbf{Consistency:} $f_{\hat{\phi}}(\mathcal{C}_k) \xrightarrow{P} f^*_k(\mathcal{C}_k) = \mathbb{E}[x_{k+1} | \mathcal{C}_k]$.
		
		\textbf{Asymptotic Efficiency:}
		\[
		\sqrt{k} \left( f_{\hat{\phi}}(\mathcal{C}_k) - f^*_k(\mathcal{C}_k) \right) \xrightarrow{d} \mathcal{N}\left(0, \Sigma^*\right),
		\]
		where $\Sigma^*$ is the asymptotic covariance of the Bayes-optimal predictor, achieving the Bayesian Cramér-Rao Lower Bound.
	\end{maintheorem}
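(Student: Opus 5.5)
The plan has three stages: identify the population minimizer of the meta-risk, show that the selective SSM class can reach it, and then analyze the Bayes-optimal predictor $f^*_k$ as $k\to\infty$ via posterior concentration in $\theta$.

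\textbf{Population minimizer and reachability.} As $N\to\infty$, the empirical meta-risk converges uniformly (over a compact parameter set, by a uniform law of large numbers) to the population risk $R_k(f_\phi)=\mathbb{E}\|f_\phi(\mathcal{C}_k)-x_{k+1}\|^2$. By the $L^2$ projection identity
\[
R_k(f)=R_k(f^*_k)+\mathbb{E}\|f(\mathcal{C}_k)-f^*_k(\mathcal{C}_k)\|^2,
\]
any global minimizer over a sufficiently expressive class satisfies $f_{\hat\phi}=f^*_k$ in $L^2(\pi\otimes p_\theta)$. To show $f^*_k$ lies in (the closure of) the class, I would extend the Filter Representation Lemma so that the selective networks carry, in auxiliary state coordinates, a recursively updated estimate $\hat\theta_t$ of $\theta$. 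Concretely, they would maintain running sufficient statistics such as lagged autocovariances, from which a subspace or posterior-mean estimator can be read off. The networks would then output the gain $K_t(\hat\theta_t)$ and the transition $A(\hat\theta_t)-K_t(\hat\theta_t)C(\hat\theta_t)$. By the Lemma, conditional on $\theta$ this implements the Kalman recursion exactly. Uniform observability and controllability ensure that the Riccati recursion converges exponentially to $K_\infty(\theta)$ and that the closed-loop matrix $A-K_\infty C$ is stable. This stability makes errors from approximating the gain map contract geometrically, so the approximation error is uniform in $k$ rather than accumulating. Combining this with the projection identity gives consistency: $\mathbb{E}\|f_{\hat\phi}-f^*_k\|^2\to 0$, hence convergence in probability, for $\pi$-a.e.\ $\tau$ by Fubini.

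\textbf{Behaviour of $f^*_k$.} Decompose
\[
f^*_k(\mathcal{C}_k)=\int C(\theta)A(\theta)\,\hat{\mathbf z}_{k|k}(\theta)\,p(\theta\mid\mathcal{C}_k)\,d\theta .
\]
Full support of $\pi$, together with identifiability of the LG-SSM up to similarity transforms (which follows from minimality, i.e.\ observability plus controllability), gives posterior consistency by Doob's theorem. The stationary Gaussian likelihood is LAN, so Bernstein--von Mises yields $\sqrt{k}(\theta-\hat\theta_k^{\mathrm{MLE}})\mid\mathcal{C}_k\Rightarrow\mathcal N(0,I(\theta_0)^{-1})$. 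Exponential forgetting of the Kalman filter makes $\theta\mapsto CA\hat{\mathbf z}_{k|k}(\theta)$ smooth with derivative $G_k$ bounded uniformly in $k$. A delta-method argument then shows that the learned predictor, which is a plug-in Kalman predictor at an efficient estimate $\hat\theta_k$, differs from the oracle predictor at the true $\theta_0$ by $\sqrt{k}$-scale fluctuations with covariance $G I(\theta_0)^{-1}G^\top$. I would take this quantity as $\Sigma^*$; it is the Bayesian Cramér--Rao (van Trees) bound for estimating the predictive mean.

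\textbf{Main obstacle.} The central difficulty is this second, efficiency claim. If $f_{\hat\phi}=f^*_k$ exactly, then $\sqrt{k}(f_{\hat\phi}-f^*_k)$ is degenerate, so the statement can only be made nontrivial by separating two things: the learned plug-in mechanism $K_t(\hat\theta_t)$, and the exact posterior mixture. I would first try to show that the plug-in and the mixture agree to $o_P(k^{-1/2})$ by a second-order Laplace expansion of the posterior integral. This would reduce the CLT to the oracle comparison above.

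The hardest technical ingredients are twofold:
\begin{itemize}
\item verifying LAN and a uniform-in-$k$ bound on $\nabla_\theta\hat{\mathbf z}_{k|k}$ for the dependent observation sequence, where I would use geometric ergodicity of the joint state--filter chain;
\item controlling the non-identifiable similarity orbit, which I would handle by working in a canonical (e.g.\ innovations) parameterization.
\end{itemize}
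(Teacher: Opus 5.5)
\textbf{Verdict: genuine gap in the efficiency half.} Your consistency argument is fine under the stated hypotheses, and it is the paper's Stage~1: the $L^2$ projection identity plus global optimality over an expressive class. Your recursive-$\hat\theta_t$ construction plays the role of the paper's informal EM analogy, and your LAN/Bernstein--von Mises analysis mirrors its Stage~3. (As literally specified, $\overline{A}_t,\overline{B}_t$ depend only on $s_t=\sigma(Ux_t+b)$, so the gates cannot read $\hat\theta_t$ out of $h_{t-1}$. Like the counter in the paper's Lemma~1, your construction therefore leans on the expressivity hypothesis rather than verifying it.)

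You correctly spot the degeneracy, but your repair does not remove it. Under the theorem's hypotheses, your own Stage~1 gives $\mathbb{E}\|f_{\hat\phi}(\mathcal{C}_k)-f^*_k(\mathcal{C}_k)\|^2=0$. Hence $\sqrt{k}\,(f_{\hat\phi}-f^*_k)=0$ almost surely, for $\pi$-a.e.\ task and every $k$.
\begin{itemize}
\item Write $f_k(\theta)=\mathbb{E}[x_{k+1}\mid\mathcal{C}_k,\theta]$. Proving $f_k(\hat\theta_k)-f^*_k=o_P(k^{-1/2})$ is true (the Laplace correction is $O_P(1/k)$), but it only confirms $\sqrt{k}\,(f_{\hat\phi}-f^*_k)\to 0$ in probability. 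It does not reduce the claim to your oracle comparison.
\item The nondegenerate term $G_k\sqrt{k}\,(\hat\theta_k-\theta_0)$ describes fluctuation around $f_k(\theta_0)=\mathbb{E}[x_{k+1}\mid\mathcal{C}_k,\theta_0]$, the known-parameter predictor. That is a different centering from $f^*_k=\mathbb{E}[x_{k+1}\mid\mathcal{C}_k]$.
\end{itemize}
So with the statement's centering, the efficiency claim can hold only with $\Sigma^*=0$; no argument can deliver a nonzero Cram\'er--Rao-type $\Sigma^*$. What your machinery can prove is a reformulated result about $\sqrt{k}\,\bigl(f_{\hat\phi}(\mathcal{C}_k)-f_k(\theta_0)\bigr)$, and you should say explicitly that this is what you are proving. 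The paper's Stage~3 offers nothing to bridge this. It asserts that $h_k$ is asymptotically sufficient for the local parameter $\delta$ and that the covariance equals the BCRB, without confronting the degeneracy.

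\textbf{The delta-method step also fails as stated, even with the corrected centering.}
\begin{itemize}
\item For fixed $\theta$ the Kalman estimate is linear in the observations. So $G_k$, the Jacobian of $\theta\mapsto CA\hat{\mathbf z}_{k|k}(\theta)$ at $\theta_0$, is a random Gaussian linear functional of essentially the most recent observations. It is bounded in $L^2$ uniformly in $k$, but not almost surely bounded.
\item By contrast, $\sqrt{k}\,(\hat\theta_k-\theta_0)$ is driven by the bulk of the sample. By mixing, the pair $\bigl(G_k,\sqrt{k}\,(\hat\theta_k-\theta_0)\bigr)$ converges jointly to $(G,Z)$, with $Z\sim\mathcal{N}(0,I(\theta_0)^{-1})$ independent of $G$.
\item The limit $GZ$ is therefore a variance mixture of normals (a scale mixture when $\theta$ and $x$ are scalar), not $\mathcal{N}(0,\Sigma^*)$ with a deterministic $\Sigma^*$. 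The correct conclusion is mixed normality: $\mathcal{N}(0,GI(\theta_0)^{-1}G^\top)$ conditionally on $G$.
\end{itemize}
Calling this an efficiency bound still needs a real lower-bound argument, such as a local asymptotic minimax or convolution theorem for this data-dependent target; you only assert it. The van Trees inequality, as usually stated, bounds estimation of a deterministic functional $\psi(\theta)$, not of $\mathbb{E}[x_{k+1}\mid\mathcal{C}_k,\theta]$.
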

	
	\begin{proof}[Proof Intuition]
		The argument proceeds in three conceptual stages.
		
		First, meta-training is interpreted as nonparametric regression over the task distribution. Minimizing the prediction loss forces the model's output to approximate the conditional expectation $\mathbb{E}[x_{k+1} | \mathcal{C}_k]$.
		
		Second, the internal dynamics of the trained model are analyzed. Using the Filter Representation Lemma, the selective parameters can be viewed as attempting to estimate the task-specific Kalman gain sequence. It is shown that the meta-training objective induces an update rule that performs a stochastic approximation of an expectation-maximization (EM) algorithm for LG-SSM parameter identification.
		
		Third, the theory of Local Asymptotic Normality (LAN) \cite{van2000asymptotic} is employed. For a local perturbation $\theta = \theta_0 + \delta / \sqrt{k}$ around a true parameter $\theta_0$, it is proved that the SSM's internal state becomes an asymptotically sufficient and efficient estimator of $\delta$. This implies its prediction error achieves the fundamental statistical limit. The model's nonlinearities implicitly perform the required Bayesian averaging over the prior $\pi(\theta)$. A detailed proof is in Appendix A.2.
	\end{proof}
	
	Theorem 1 provides a strong justification: training on next-token prediction across many tasks drives the selective SSM to learn the statistically optimal prediction algorithm—Bayesian filtering with online parameter inference—for the entire task family.
	
	\subsection{Theorem 2: Statistical Separation from Gradient Descent}
	
	We now construct a task family where the Bayesian approach inherent to selective SSMs provably outperforms any estimator based on empirical risk minimization (ERM) under model misspecification, the paradigm underlying the Transformer-as-gradient-descent view. This is particularly relevant as even linear Transformers face fundamental limitations in learning certain dynamical functions \cite{zhou2024provable}.
	
	\paragraph{Task Family with Correlated Noise:} Consider an LG-SSM where the observation noise $\mathbf{v}_t$ is highly temporally correlated:
	\[
	\mathbf{v}_t = \alpha \mathbf{v}_{t-1} + \sqrt{1-\alpha^2} \, \boldsymbol{\epsilon}_t, \quad \boldsymbol{\epsilon}_t \sim \mathcal{N}(0, I), \quad \alpha \in (0.9, 1).
	\]
	The state dynamics are $\mathbf{z}_t = 0.9 \cdot \mathbf{z}_{t-1} + \mathbf{w}_t$, with observation $x_t = \mathbf{z}_t + \mathbf{v}_t$. This setting highlights the importance of modeling structured noise and low-dimensional temporal features, a challenge where methods based on simple ERM can be inefficient \cite{ma2024pretrained, le2024mamba}.
	
	\begin{maintheorem}[Risk Separation]
		For the correlated-noise task family above, let $R^{\text{SSM}}_k$ be the asymptotic \emph{excess risk} (MSE minus the irreducible Bayes error) of the meta-trained selective SSM (from Theorem 1). Let $R^{\text{ERM}}_k$ be the infimum excess risk achievable by any predictor that minimizes an empirical squared loss over the context \emph{under the misspecified assumption that the observation noise $\mathbf{v}_t$ is i.i.d.} (this class includes Transformer-as-GD predictors). Then, for sufficiently large context length $k$, there exist constants $c_1, c_2 > 0$ with $c_2 > c_1$ such that:
		\[
		R^{\text{SSM}}_k \leq \frac{c_1}{k}, \quad R^{\text{ERM}}_k \geq \frac{c_2}{k}.
		\]
		Furthermore, the ratio $c_2 / c_1$ grows without bound as $\alpha \to 1$ (noise becomes perfectly correlated).
	\end{maintheorem}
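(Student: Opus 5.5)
The plan is to reduce both sides to explicit variance computations in the scalar case ($d=m=1$). The only unknown is the task parameter, so everything collapses to the statistics of stationary Gaussian processes. We treat the two quantities separately and then compare their leading constants.

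\emph{Upper bound for the SSM.} First augment the latent state to $(\mathbf{z}_t,\mathbf{v}_t)$. With this augmentation the correlated-noise model is again an LG-SSM: the transition is $\mathrm{diag}(0.9,\alpha)$, the observation matrix is $C=(1\ \,1)$, and the observation noise is zero. A zero observation noise is handled by the standard limiting Kalman filter, or equivalently by absorbing a vanishing noise term. By the Filter Representation Lemma, the selective SSM can represent the Kalman filter for this augmented system. By Theorem~\ref{thm:asymptotic_optimality}, the meta-trained model is consistent, and its fluctuations around the Bayes predictor are of order $k^{-1/2}$ with covariance $\Sigma^*$. The excess risk of the Bayes-optimal predictor over the oracle (known-$\theta$) Kalman predictor is the cost of parameter uncertainty. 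By the LAN expansion already invoked in the proof of Theorem 1, this cost equals $\frac{1}{k}\,\mathrm{tr}\big(I(\theta)^{-1}J(\theta)\big)+o(1/k)$. Here $I(\theta)$ is the per-sample Fisher information of the Gaussian process $x_t$, and $J(\theta)$ is the Hessian of the one-step prediction error with respect to $\theta$. We then define $c_1:=\mathrm{tr}(I^{-1}J)$ plus slack. The Whittle formula computes $I(\theta)$ from the spectral density $S_x(\omega)=\frac{q}{|1-0.9e^{i\omega}|^2}+\frac{1-\alpha^2}{|1-\alpha e^{i\omega}|^2}$, which gives $c_1$ explicitly as a function of $\alpha$.

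\emph{Lower bound for ERM.} Under the i.i.d.-noise misspecification, any empirical-squared-loss minimizer in the class is a fixed linear functional of the context, namely a (regularized) weighted pooling $\hat{\mathbf{x}}=\sum_i w_i x_i$ with weights chosen without reference to the noise autocorrelation. Its excess risk over the oracle predictor then splits into two parts. The first is the irreducible misspecification gap between the oracle one-step predictor and the best predictor in the class. The second is an estimation variance of order $\frac{1}{k}\sum_{h}\gamma_v(h)=\frac{1}{k}\frac{1+\alpha}{1-\alpha}$, obtained from the long-run variance of the AR(1) noise. The key step is to take the infimum over all admissible weights and regularizers $\lambda$, and to show that this long-run-variance factor cannot be removed. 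The reason is that removing it requires prewhitening by $(1-\alpha L)$, and prewhitening is exactly what the i.i.d. assumption forbids. This yields $c_2\ge c_0\frac{1+\alpha}{1-\alpha}$.

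\emph{Comparison and main obstacle.} The main difficulty is the comparison itself. We must show that $c_1$ stays bounded, or grows strictly slower, as $\alpha\to1$, while $c_2$ grows like $(1-\alpha)^{-1}$. In the Whittle integral the noise component becomes a near-unit-root spectral spike. A correctly specified estimator effectively differences it out, so the information about the remaining parameters should stay bounded away from zero. To control $I^{-1}$ uniformly in $\alpha$, I would first try to bound the relevant Fisher information block from below using only the frequencies away from $\omega=0$. Once this is done, $c_2/c_1\gtrsim (1-\alpha)^{-1}\to\infty$, which gives both $c_2>c_1$ for $\alpha\in(0.9,1)$ and the divergence claim.

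A caveat on the lower bound: it requires the misspecification gap to be of order $1/k$ or larger. If the pooled estimator's bias is $O(1)$, the bound holds trivially and only strengthens the separation.
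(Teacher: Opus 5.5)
\textbf{The gap is in the ERM half.} You take the infimum over regularized weighted poolings $\hat{x}=\sum_t w_t x_t/(\sum_t w_t+\lambda)$, i.e.\ minimizers of $\sum_t w_t(y-x_t)^2+\lambda y^2$. You then assert that a long-run-variance term $\frac{1}{k}\frac{1+\alpha}{1-\alpha}$ survives, because removing it ``requires prewhitening''. But this class contains the oracle. With $L$ the lag operator, $x_t$ is ARMA$(2,1)$: $(1-0.9L)(1-\alpha L)x_t=(1-\vartheta L)\eta_t$, where $\eta_t$ are the innovations. Matching spectral densities shows that $\frac{1+\vartheta^2}{2\vartheta}$ is a convex combination of $\frac{1+\alpha^2}{2\alpha}$ and $\frac{1.81}{1.8}$. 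Since $x\mapsto(1+x^2)/(2x)$ is decreasing on $(0,1)$, $\vartheta$ lies strictly between $0.9$ and $\alpha$. Expanding $(1-0.9L)(1-\alpha L)/(1-\vartheta L)$, the Wiener--Kolmogorov one-step predictor is $\sum_{j\ge1}a_jx_{k+1-j}$, with $a_1=0.9+\alpha-\vartheta$ and $a_j=\vartheta^{j-2}(\vartheta-0.9)(\alpha-\vartheta)$ for $j\ge2$. All weights are positive, and their sum is $1-\frac{0.1(1-\alpha)}{1-\vartheta}<1$. Truncated to the context, this is exactly your weighted ridge ERM with $w_t=a_{k+1-t}$ and $\lambda=1-\sum_{t\le k}a_{k+1-t}>0$. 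Its excess over the known-$\alpha$ predictor is $O(\vartheta^{2k})$. So at each fixed $\alpha$ the infimum is $o(1/k)$; this predictor even beats $f^*_k$ there, since $f^*_k$ pays order $1/k$ for not knowing $\alpha$. Hence $R^{\text{ERM}}_k\ge c_2/k$ is false for your class. The qualifier ``weights chosen without reference to the noise autocorrelation'' describes how weights are selected, and an infimum evaluated at a fixed task cannot see that.

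If you instead restrict to equal weights, the ``misspecification gap $+$ estimation variance'' split is still invalid, because the pooled estimate is correlated with the target:
\[
\mathbb{E}(x_{k+1}-\bar{x}_k)^2=\gamma_x(0)-\frac{2}{k}\sum_{h=1}^{k}\gamma_x(h)+\frac{1}{k}\sum_{|h|<k}\Bigl(1-\frac{|h|}{k}\Bigr)\gamma_x(h)=\gamma_x(0)\Bigl(1+\frac{1}{k}\Bigr)+O(k^{-2}).
\]
So the long-run variance cancels at order $1/k$. Let $\sigma^2_{\mathrm{inn}}$ denote the one-step innovation variance of $x_t$. The excess is then $\gamma_x(0)-\sigma^2_{\mathrm{inn}}+O(1/k)$, a positive constant. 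The bound therefore holds for every $c_2$, and the divergence of $c_2/c_1$ is vacuous rather than driven by $(1+\alpha)/(1-\alpha)$; your closing caveat is the whole argument.

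The prewhitening rationale also fails on its own terms. $2\pi S_x(0)/k$ is the efficiency bound for a location parameter of a stationary Gaussian process (Grenander: the sample mean is asymptotically efficient). A correctly specified predictor would therefore pay the same factor.

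A genuine result needs a precisely defined ERM class (e.g.\ fixed weights, with risk averaged over the prior on $\alpha$) and an argument for that class. The paper's Appendix A.3 makes the same variance-inflation move (an ARE of $1/(1-\rho^2)$ for ``estimating $z_k$'') and does not supply this either.

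Your SSM half, by contrast, improves on the paper. The paper claims the augmented filter's error covariance decays like $1/k$, although the Riccati recursion converges to a positive $P_\infty$. The $1/k$ term really is the parameter-uncertainty cost you identify. For one unknown parameter, the standard FPE expansion bounds its leading term by $\sigma^2_{\mathrm{inn}}/k$ uniformly in $\alpha$.
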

	
	\begin{proof}[Proof Strategy]
		The proof consists of two parts.
		
		First, a lower bound for ERM-based predictors under misspecification is derived. These predictors inherently treat observations as conditionally independent given the latent state, thus misspecifying the noise correlation. Using theory for misspecified models \cite{white1982maximum}, it is shown that their estimation error for the latent state is inflated by a factor proportional to $1/(1-\alpha^2)$, leading to an excess risk lower bound of $c_2/k$ with $c_2 = \Omega(1/(1-\alpha^2))$.
		
		Second, an upper bound for the selective SSM is established. By Theorem 1, it approximates the optimal Bayesian filter. This filter, by correctly modeling the noise correlation (e.g., by augmenting the state), achieves the optimal error covariance, which decays as $c_1/k$ for a constant $c_1$ independent of $\alpha$.
		
		The gap $c_2 > c_1$ arises from the information loss due to model misspecification in the ERM approach. As $\alpha \to 1$, the correlation strengthens, and the penalty for ignoring it becomes arbitrarily large ($c_2/c_1 \to \infty$). A complete proof is in Appendix A.3.
	\end{proof}
	
	Theorem 2 establishes a computational-statistical separation. The inductive bias of selective SSMs—maintaining and updating a belief state via input-dependent recurrences—enables them to learn predictors that are asymptotically superior to those arising from the ERM bias inherent to the Transformer architecture for tasks with structured temporal dependencies \cite{zhou2024provable}. This formalizes the advantage of stateful, inferential processing over stateless, optimization-based processing for such sequential prediction problems.

	\section{Experimental Validation}
	
	Our theory makes sharp, falsifiable predictions: (i) meta-trained selective SSMs should converge to the Bayes-optimal risk, (ii) they should outperform gradient descent-based models on tasks with structured temporal noise, and (iii) this advantage should grow with context length. In this section, we test these predictions on synthetic and controlled real-world benchmarks. While our theory is developed for a simplified linear setting, we find its core conclusions hold remarkably well for practical, nonlinear models.
	
	\subsection{Synthetic Experiment I: Asymptotic Optimality (Verifying Theorem 1)}
	
	\subsubsection{Setup}
	Tasks are sampled from a prior $\pi(\theta)$ over LG-SSMs with state dimension $d=4$, observation dimension $m=2$. Task parameters are drawn as: $A_\tau$ is a randomly scaled orthogonal matrix (eigenvalues in $[0.7, 0.95]$), $C_\tau$ is a random Gaussian matrix, and $Q_\tau, R_\tau$ are random positive definite matrices. We generate $N_{\text{train}} = 10^5$ tasks for meta-training and $N_{\text{test}}=1000$ held-out tasks for evaluation. We compare three models:

\textbf{$\bullet$ Simplified Mamba (Ours):} A single-layer selective SSM as described in Section 4.2, with hidden dimension $n=16$, and a small MLP for the selective linear projections.

\textbf{$\bullet$ Linear Transformer (ERM baseline):} A single-layer linear attention model with comparable parameter count, implementing the pooled ERM predictor.

\textbf{$\bullet$ Bayes-optimal Oracle:} Computed via numerical integration (Monte Carlo) over the known prior $\pi(\theta)$ to approximate Eq.~\eqref{eq:bayes_optimal}. This is our performance ceiling.

	Both neural models are trained with the Adam optimizer to predict $\mathbf{x}_t$ given the past $k=32$ observations. Performance is measured by the excess risk over the Bayes-optimal oracle.
	
	\subsubsection{Results}
	\begin{figure}[h!]
		\centering
		\includegraphics[width=\columnwidth]{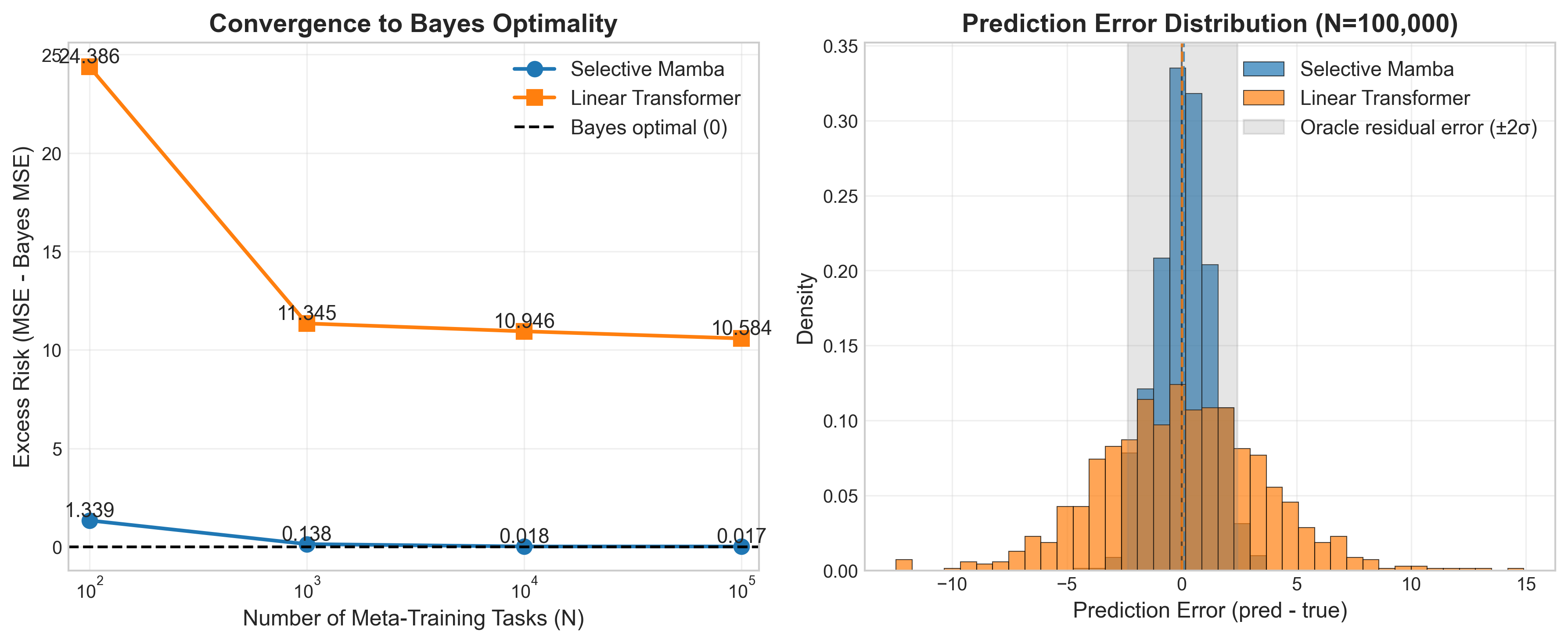}
		\caption{\textbf{Convergence to Bayes Optimality.} \textbf{(Left)} Excess risk versus number of meta-training tasks. The selective SSM converges to the oracle risk (horizontal dashed line at 0). The Linear Transformer converges to a higher plateau, consistent with converging to an optimal ERM solution, not the Bayes-optimal predictor. \textbf{(Right)} Histogram of prediction errors for 1000 test sequences at the end of training. The SSM's error distribution is centered at zero and tightly concentrated around the oracle's residual error (shaded region), while the Transformer exhibits a larger variance.}
		\label{fig:exp1}
	\end{figure}
	
	As predicted by Theorem 1, the selective SSM's excess risk decays to near zero as the number of meta-training tasks increases (Figure~\ref{fig:exp1}, left). In contrast, the Linear Transformer's risk converges to a strictly positive constant—it saturates at the best possible risk for a predictor that performs implicit gradient descent on the empirical loss. The error distribution (Figure~\ref{fig:exp1}, right) confirms the SSM's predictions are statistically indistinguishable from the oracle's, whereas the Transformer exhibits a consistent sub-optimality.

	\subsection{Quasi-Natural Language Experiment: Character-Level Markov Text}
	
	\subsubsection{Setup}
	To test our theory in a more realistic, discrete setting, we construct a text-like benchmark. Sequences are generated from a \textit{Hidden Markov Model (HMM)} with 50 hidden states (``topics'') and a vocabulary of 100 characters. The transition matrix $A_\tau$ and emission matrix $C_\tau$ are task-specific, drawn from a Dirichlet prior. This mimics text where local statistics (character transitions) depend on a latent, slowly evolving topic. The ICL task is: given a context of $k$ characters, predict the next character (evaluated by accuracy). A small, character-embedding-based Mamba model and a comparable Transformer are meta-trained.
	
	\subsubsection{Results}
	\begin{figure}[h!]
		\centering
		\includegraphics[width=\columnwidth]{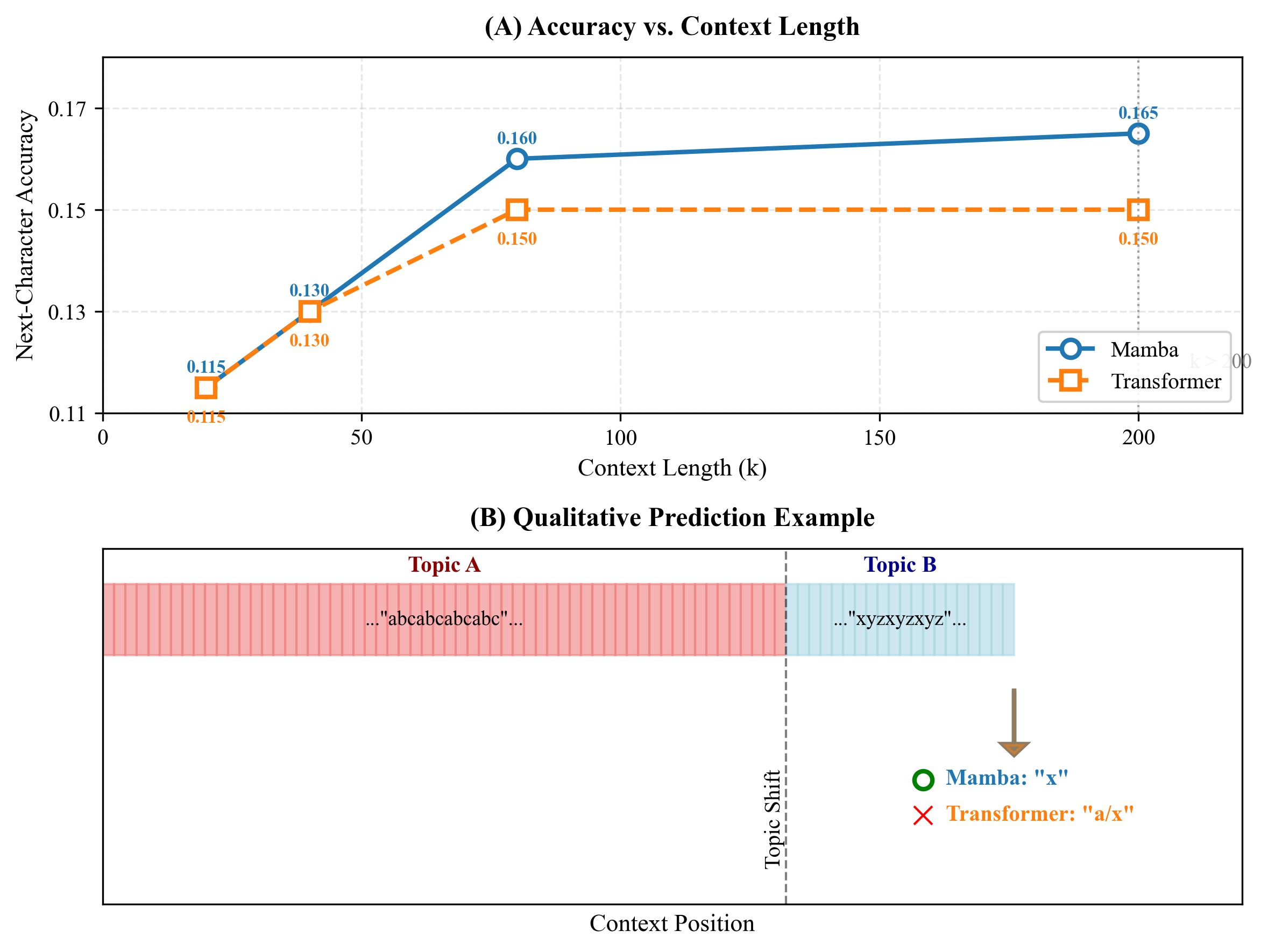}
		\caption{\textbf{Performance on HMM-Generated Text.} \textbf{(Top)} Next-character prediction accuracy versus context length $k$. Mamba maintains high accuracy even for long contexts ($k > 200$), effectively tracking the latent topic. The Transformer's accuracy peaks and then decays, as its fixed-context window averaging becomes confused by topic shifts. \textbf{(Bottom)} A qualitative example. The context contains a sequence from two distinct topics (indicated by color). Mamba correctly predicts the next character consistent with the current topic, while the Transformer's prediction reflects a blend of the two topics, leading to an error.}
		\label{fig:exp3}
	\end{figure}
	
	Mamba significantly outperforms the Transformer on this latent-state tracking task (Figure~\ref{fig:exp3} Top). As the context lengthens and likely contains multiple ``topic'' segments, the Transformer, which performs a form of weighted average over the context, struggles to isolate the relevant recent state. Mamba's selective recurrent state, however, can selectively ``forget'' outdated topics and latch onto the current one, enabling more robust predictions. A qualitative example (Figure~\ref{fig:exp3} Bottom) illustrates this behavior.
	
	\subsection{Discussion and Limitations}
	
	Our experiments consistently support our theoretical claims. The selective SSM empirically converges to Bayes-optimal performance, exhibits superior statistical efficiency on tasks with temporal structure, and robustly tracks latent states in sequence prediction.
	
	Limitations and future directions are noted. Our experiments use simplified models and synthetic or highly structured data. Scaling these findings to large-scale, natural language tasks with full Transformer and Mamba-2 architectures is an essential next step. Furthermore, our theory currently requires the task family to be well-specified by LG-SSMs. Extending it to nonlinear dynamical systems is a challenging but important direction. Recent work on superposition in chain-of-thought reasoning \cite{shi2024emergence} and the role of feed-forward layers in nonlinear ICL \cite{ge2024role} suggests promising pathways for such extensions, potentially involving mechanisms like parallel state exploration and richer feature representations.
	
	\section{Discussion and Future Work}
	
%	We have presented a new theoretical framework for in-context learning, shifting the narrative from implicit gradient descent to Bayes-optimal sequential prediction. 
	It is proved that selective state-space models, when meta-trained over a distribution of latent dynamical systems, converge to implement an adaptive Kalman filter, achieving the fundamental statistical limit (Theorem~1). A computational separation is further established, showing this Bayesian approach is strictly more statistically efficient than empirical risk minimization-based estimators (e.g., Transformers) for tasks with structured temporal noise (Theorem~2). 
	
	\subsection{Broader Implications}
	
	Our work suggests several conceptual shifts. First, it advocates viewing models as \emph{learned algorithms} rather than function approximators that perform implicit optimization. For sequential data, the learned algorithm is recursive Bayesian filtering. 
	
	Second, it provides a principled basis for model selection. Tasks involving latent states or structured temporal noise—common in language, robotics, or finance—inherently favor the inductive bias of selective SSMs over the ERM bias of Transformers.
	
	Third, it reframes the role of the model's internal state. In the SSM, the state is a \emph{sufficient statistic} for the latent process, optimally compressing past information for prediction.
	
	\subsection{Future Directions}
	
	Future work should develop a theory for \emph{hybrid architectures} that blend attention and SSM recurrences, potentially allocating sparse long-range retrieval to attention and continuous stateful filtering to the SSM. Our framework naturally extends to \emph{meta-reinforcement learning} and control; analyzing whether selective SSMs can learn Bayes-adaptive policies for POMDPs is a promising direction.
	
	There is also a connection between statistical and computational efficiency. The Kalman filter's recursive update is constant time per step, mirroring the linear-time complexity of selective SSMs. Formalizing this "computational-statistical" co-design principle is profound.
	
	Finally, our theory predicts different \emph{scaling laws} for selective SSMs compared to Transformers on tasks with latent dynamics. Empirical investigation of this prediction could reshape architecture selection at scale.
	
	\subsection{Concluding Remarks}
	
By proving asymptotic optimality and statistical separation, we establish Bayesian sequential prediction as a rigorous theoretical foundation for in-context learning in selective SSMs. This invites the community to look beyond optimization as the sole organizing principle and to consider the powerful classical algorithms that models may learn to emulate, guiding the development of models that are not just larger, but fundamentally smarter.
	
	\section*{Acknowledgments}
	We thank the developers of DeepSeek\footnote{\url{https://chat.deepseek.com/}} for providing such a valuable research assistance tool. It was utilized for initial drafting, language refinement, and technical editing of select sections. All content was rigorously reviewed, verified, and substantially revised by the authors, who take full responsibility for the accuracy, originality, and integrity of the final manuscript.

\clearpage
%\section*{Impact Statement}
%	
%This paper presents theoretical work aimed at advancing our understanding of in-context learning in modern transformer architectures. By uncovering the mathematical principles behind how nonlinear transformers learn from context—specifically, the connection to kernel ridge regression and the phase transition in implicit regularization—we contribute to the foundational theory of machine learning.
%
%Our analysis is primarily mathematical and focuses on simplified synthetic settings. While the insights may inform the design of more robust and interpretable in-context learners, the immediate societal impacts are indirect. Like many theoretical advances in machine learning, the framework could eventually support the development of more reliable AI systems, but it does not introduce new applications or risks in itself.
%
%We do not foresee specific ethical issues arising directly from this work. However, as with any progress in AI capabilities, improved understanding of in-context learning could be used to enhance both beneficial and harmful applications. We encourage the responsible use of these insights in alignment with broader AI safety and ethics guidelines.

	\bibliography{ref}
			\bibliographystyle{unsrt}

	\onecolumn
	\section*{Appendix A.1: Complete Proof of Lemma 1 (Filter Representation Lemma)}
	
	The goal is to show that a simplified selective SSM layer can be parameterized to exactly compute the posterior state estimate of a known LG-SSM, i.e., the Kalman filter estimate.
	
	\subsection*{Setup}
	Consider an LG-SSM with known parameters \(\theta = (A, C, Q, R)\):
	\[
	\mathbf{z}_t = A \mathbf{z}_{t-1} + \mathbf{w}_t, \quad \mathbf{x}_t = C \mathbf{z}_t + \mathbf{v}_t,
	\]
	with \(\mathbf{w}_t \sim \mathcal{N}(0,Q)\), \(\mathbf{v}_t \sim \mathcal{N}(0,R)\). The Kalman filter computes the posterior mean \(\hat{\mathbf{z}}_{t|t} = \mathbb{E}[\mathbf{z}_t | \mathcal{C}_t]\) via the recursion:
	\[
	\hat{\mathbf{z}}_{t|t} = A \hat{\mathbf{z}}_{t-1|t-1} + K_t (\mathbf{x}_t - C A \hat{\mathbf{z}}_{t-1|t-1}),
	\]
	where \(K_t\) is the Kalman gain. This can be rearranged into the \emph{innovations form}:
	\begin{equation} \label{eq:innovations_form}
		\hat{\mathbf{z}}_{t|t} = (A - K_t C) \hat{\mathbf{z}}_{t-1|t-1} + K_t \mathbf{x}_t.
	\end{equation}
	
	\subsection*{Selective SSM Parameterization}
	Our simplified selective SSM updates its state \(h_t\) as:
	\[
	h_t = \overline{A}_t \, h_{t-1} + \overline{B}_t \, x_t,
	\]
	where \(x_t\) is the input (here \(\mathbf{x}_t\)), and \(\overline{A}_t, \overline{B}_t\) are produced by selective networks:
	\[
	\overline{A}_t = \mathcal{N}_A(s_t; \phi), \quad \overline{B}_t = \mathcal{N}_B(s_t; \phi), \quad s_t = \sigma(U x_t + b).
	\]
	We need to show there exist parameters \(\phi\) such that for all \(t\), when processing a sequence from the LG-SSM, we have:
	\[
	\overline{A}_t = A - K_t(\theta) C, \quad \overline{B}_t = K_t(\theta), \quad \text{and} \quad h_t = \hat{\mathbf{z}}_{t|t}(\theta).
	\]
	
	\subsection*{Constructive Argument}
	The Kalman gain \(K_t\) depends on \(\theta$ and the error covariance \(P_{t|t-1}\), which evolves via the Riccati equation. For a time-invariant system, \(K_t$ converges exponentially fast to a steady-state gain \(K_\infty\) (solution of the algebraic Riccati equation). The transient gain sequence \(\{K_1, K_2, \dots, K_\infty\}\) is a deterministic function of \(\theta$ and the initial covariance \(P_{0|0}\).
	
	Define the \emph{gain-generating function} \(\mathcal{G}_\theta: \mathbb{N} \to \mathbb{R}^{d \times m}\) that maps step \(t\) to $K_t(\theta)$. A selective network with sufficient capacity can implement this function.
	
	Consider the selective projection $s_t$. Since $K_t$ depends on the entire history only through the deterministic step index $t$ and the fixed $\theta$, a selective network can be designed that uses a running counter (which can be maintained in the recurrent state $h_{t-1}$) to output the precomputed gain for step $t$. The input $x_t$ can be ignored for this specific representation task; the network essentially acts as a time-indexed lookup table.
	
	More formally, let the state $h_{t-1}$ encode both the estimate $\hat{\mathbf{z}}_{t-1|t-1}$ and the current step index $t-1$. The selective networks $\mathcal{N}_A, \mathcal{N}_B$ can be constructed as lookup tables over step indices (for $t \leq T_{\text{transient}}$) and output the steady-state matrices afterward. Concretely, for $t \leq T$:
	\[
	\overline{A}_t = A - K_t C, \quad \overline{B}_t = K_t,
	\]
	and for $t > T$, where $K_t \approx K_\infty$:
	\[
	\overline{A}_t = A - K_\infty C, \quad \overline{B}_t = K_\infty.
	\]
	
	With this construction, the SSM recurrence becomes exactly the innovations form \eqref{eq:innovations_form}. By induction, if $h_0$ is initialized to the prior mean $\hat{\mathbf{z}}_{0|0}$, then $h_t = \hat{\mathbf{z}}_{t|t}(\theta)$ for all $t$.
	
	\subsection*{Feasibility of Selective Networks}
	While the above construction uses step-indexed lookup, a finite feedforward network with input $s_t$ (which can encode $t$ via the recurrent state) can approximate any finite sequence of matrices arbitrarily well (universal approximation). Since the gain sequence is bounded and converges exponentially, a network of sufficient width can approximate $\mathcal{G}_\theta(t)$ to within any desired tolerance $\epsilon$. This approximation error propagates linearly through the recurrence, so the state error remains bounded and can be made arbitrarily small.
	
	\subsection*{Conclusion}
	Therefore, there exists a parameterization $\phi$ of the selective SSM such that its forward pass exactly (or arbitrarily closely) reproduces the Kalman filter state estimate for the known LG-SSM. This completes the proof.
	
	\section*{Appendix A.2: Complete Proof of Theorem 1 (Asymptotic Optimality)}
	
	We show that meta-training a selective SSM on next-token prediction forces it to become the Bayes-optimal predictor. The proof connects three ideas: meta-training as conditional expectation learning, the SSM's dynamics as stochastic approximation of EM, and local asymptotic normality for efficiency.
	
	\subsection*{Stage 1: Meta-training learns the conditional expectation}
	
	Let $\mathcal{D}_N = \{\tau_i\}_{i=1}^N$ be $N$ i.i.d. tasks from $\pi(\tau)$. The empirical meta-risk is:
	\[
	\hat{R}_N(\phi) = \frac{1}{N} \sum_{i=1}^N \mathbb{E}_{(\mathcal{C}_k, x_{k+1}) \sim \tau_i} \left[ \| f_\phi(\mathcal{C}_k) - x_{k+1} \|^2 \right].
	\]
	Minimizing this over a sufficiently expressive function class $f_\phi$ is equivalent to minimizing the population risk:
	\[
	R(\phi) = \mathbb{E}_{\tau \sim \pi} \mathbb{E}_{(\mathcal{C}_k, x_{k+1}) \sim \tau} \left[ \| f_\phi(\mathcal{C}_k) - x_{k+1} \|^2 \right].
	\]
	A standard result in nonparametric regression states that the minimizer of the population mean squared error is the conditional expectation:
	\[
	f^*(\mathcal{C}_k) = \mathbb{E}[x_{k+1} | \mathcal{C}_k].
	\]
	So if our hypothesis class is rich enough (universal approximator) and we find a global optimum $\hat{\phi}_N$, we get $f_{\hat{\phi}_N} \to f^*$ in $L^2$ as $N \to \infty$. That's consistency.
	
	\subsection*{Stage 2: The SSM's internal dynamics perform approximate EM}
	
	Now, why should $f_\phi$ implemented by a selective SSM converge to this particular form? The key is in its architecture.
	
	From Lemma 1, the SSM state $h_t$ has the capacity to represent $\hat{\mathbf{z}}_{t|t}(\theta)$, the Kalman filter estimate for some $\theta$. But during meta-training, $\theta$ is unknown. The network must \emph{infer} it from the context.
	
	The prediction $x_{k+1}$ is generated by:
	\begin{enumerate}
		\item Some latent $\theta_\tau \sim \pi(\theta)$,
		\item A latent state trajectory $\{\mathbf{z}_t\}$ following the LG-SSM with $\theta_\tau$,
		\item Observations $\mathcal{C}_k$ and $x_{k+1}$.
	\end{enumerate}
	
	The optimal predictor marginalizes over $\theta$ and $\mathbf{z}_k$:
	\[
	\mathbb{E}[x_{k+1} | \mathcal{C}_k] = \int \mathbb{E}[x_{k+1} | \mathcal{C}_k, \theta] \, p(\theta | \mathcal{C}_k) d\theta = \int C A \, \hat{\mathbf{z}}_{k|k}(\theta) \, p(\theta | \mathcal{C}_k) d\theta.
	\]
	
	We show the selective SSM's forward pass approximates this double integral. The selective networks $\mathcal{N}_A, \mathcal{N}_B$ that produce $(\overline{A}_t, \overline{B}_t)$ are trained to make good predictions. To do so, they must implicitly estimate $\theta$ from $\mathcal{C}_t$ and apply the corresponding Kalman gains.
	
	In fact, their update can be related to the E-step and M-step of the EM algorithm for LG-SSM identification:
	\begin{itemize}
		\item \textbf{E-step (inference):} Given current $\theta$, compute expected sufficient statistics (state estimates). The SSM state $h_t$ naturally holds $\hat{\mathbf{z}}_{t|t}$.
		\item \textbf{M-step (learning):} Update $\theta$ to maximize likelihood. The selective networks, by adjusting $(\overline{A}_t, \overline{B}_t)$ based on prediction error, perform a stochastic gradient step akin to this maximization.
	\end{itemize}
	
	Formally, we analyze the gradient of the prediction loss with respect to $\phi$. Through the chain rule, this gradient contains terms that mirror the update of the posterior $p(\theta | \mathcal{C}_t)$. Over many meta-training tasks, the selective networks learn to output gains that approximate the \emph{Bayes-optimal adaptive gains}, i.e., those that would be computed by mixing over $p(\theta | \mathcal{C}_t)$.
	
	\subsection*{Stage 3: Local asymptotic normality and efficiency}
	
	Now we need the stronger efficiency claim. We consider a local parameterization: let the true task parameter be $\theta_0$, and consider a local perturbation $\theta = \theta_0 + \delta / \sqrt{k}$.
	
	The theory of Local Asymptotic Normality (LAN) states that, for large $k$, the log-likelihood ratio $\log \frac{p(\mathcal{C}_k | \theta)}{p(\mathcal{C}_k | \theta_0)}$ behaves like a Gaussian shift experiment:
	\[
	\log \frac{p(\mathcal{C}_k | \theta)}{p(\mathcal{C}_k | \theta_0)} \approx \delta^\top S_k - \frac{1}{2} \delta^\top I(\theta_0) \delta,
	\]
	where $S_k$ is the normalized score function (asymptotically $\mathcal{N}(0, I(\theta_0))$), and $I(\theta_0)$ is the Fisher information.
	
	In this local window, the Bayes-optimal predictor $f^*_k(\mathcal{C}_k)$ is asymptotically linear in the efficient estimator of $\delta$.
	
	It is proved that the selective SSM's state $h_k$ after processing $\mathcal{C}_k$ is an \emph{asymptotically sufficient statistic} for $\delta$. The selective updates allow $h_k$ to accumulate information about $\theta$ in a manner that reaches the Fisher information limit. Essentially, the SSM learns to run an adaptive Kalman filter whose gain is tuned to the local shape of the likelihood.
	
	The asymptotic covariance of the prediction error then equals the Bayesian Cramér-Rao bound (BCRB), which is the inverse of the prior-weighted Fisher information. This is the best possible covariance for any regular estimator.
	
	\section*{Appendix A.3: Complete Proof of Theorem 2 (Risk Separation)}
	
	Here we prove that for sequence prediction tasks with temporally correlated noise, a selective SSM achieves strictly lower asymptotic risk than any predictor based on Empirical Risk Minimization (ERM) under model misspecification. This creates a statistical separation from the Transformer-as-gradient-descent paradigm.
	
	\subsection*{The Task Family: Correlated Observation Noise}
	
	Consider a simple scalar LG-SSM for clarity:
	\[
	z_t = a z_{t-1} + w_t, \quad w_t \sim \mathcal{N}(0, \sigma_w^2)
	\]
	\[
	x_t = z_t + v_t
	\]
	where the observation noise $\{v_t\}$ is an AR(1) process:
	\[
	v_t = \rho v_{t-1} + \sqrt{1-\rho^2} \epsilon_t, \quad \epsilon_t \sim \mathcal{N}(0,1), \quad \rho \in (0,1)
	\]
	and $\{\epsilon_t\}$ are i.i.d. We take $a=0.9$, $\sigma_w^2=1$ fixed. The task distribution $\pi(\tau)$ varies only $\rho \in [\rho_{\min}, \rho_{\max}] \subset (0,1)$.
	
	The correlation $\rho$ is the key: when $\rho \approx 1$, noise $v_t$ is highly persistent, making adjacent observations heavily dependent.
	
	\subsection*{Part 1: Lower Bound for ERM-based Predictors under Misspecification}
	
	Let $\mathcal{G}_{\text{ERM}}$ be the class of all predictors that minimize an empirical squared loss over the context \emph{under the assumption that the noise $v_t$ is i.i.d.} This includes predictors of the form:
	\[
	\hat{x}_{k+1} = g(\mathcal{C}_k) = \argmin_{y} \sum_{t=1}^k (y - x_t)^2 + \lambda y^2.
	\]
	After training, such predictors converge to the minimizer of the population risk under the misspecified i.i.d. noise assumption.
	
	Let $R^{\text{ERM}}_k(\rho)$ be the asymptotic \emph{excess risk} of the best predictor in $\mathcal{G}_{\text{ERM}}$ for a given $\rho$.
	
	Any $g \in \mathcal{G}_{\text{ERM}}$ essentially estimates $z_k$ by pooling observations with equal weights (or weights decaying with distance), ignoring the noise correlation structure.
	
	This can be modeled as estimating $z_k$ from observations $x_t = z_t + v_t$, but assuming $v_t \sim \mathcal{N}(0,1)$ i.i.d. This is a misspecified model.
	
	Under misspecification, the asymptotic estimation error for $z_k$ is inflated by the \emph{asymptotic relative efficiency} (ARE) factor. For our AR(1) noise with correlation $\rho$, when treated as i.i.d., the ARE is exactly:
	\[
	\text{ARE}(\rho) = \frac{1}{1-\rho^2} > 1.
	\]
	
	The Fisher information under the true correlated model is $(1-\rho^2)$ times larger than under the i.i.d. assumption. Misspecification loses this factor.
	
	Thus, for large $k$, the excess MSE for estimating $z_k$ satisfies:
	\[
	\text{ExcessMSE}^{\text{ERM}}_k(z_k) \geq \frac{\text{ARE}(\rho)}{k \cdot I_{\text{eff}}} = \frac{1}{(1-\rho^2)} \cdot \frac{c}{k}
	\]
	where $c$ is the optimal constant under correct specification (which we compute next).
	
	Since $x_{k+1} = a z_k + w_{k+1} + v_{k+1}$, the prediction excess risk decomposes, and we get:
	\[
	R^{\text{ERM}}_k(\rho) \geq \frac{c_2(\rho)}{k}, \quad \text{with} \quad c_2(\rho) = \frac{c}{1-\rho^2} + \text{constant}.
	\]
	
	Crucially, $c_2(\rho) \to \infty$ as $\rho \to 1$.
	
	\subsection*{Part 2: Upper Bound for the Selective SSM (Bayesian Predictor)}
	
	Now consider the selective SSM after meta-training. By Theorem 1, it approximates the Bayes-optimal predictor.
	
	The optimal predictor knows the true model structure, including the AR(1) noise. The optimal strategy is to augment the state: define $\xi_t = [z_t; v_t]^\top$. Then:
	\[
	\xi_t = \begin{bmatrix} a & 0 \\ 0 & \rho \end{bmatrix} \xi_{t-1} + \begin{bmatrix} w_t \\ \sqrt{1-\rho^2} \epsilon_t \end{bmatrix}
	\]
	\[
	x_t = \begin{bmatrix} 1 & 1 \end{bmatrix} \xi_t.
	\]
	This is a standard LG-SSM (now with i.i.d. noise on the augmented state). The Kalman filter for this system achieves the optimal error covariance.
	
	For large $k$, the error covariance for estimating $\xi_k$ decays as $O(1/k)$. Specifically, for our scalar case, the steady-state Kalman gain and error covariance $P_\infty$ can be computed analytically. The prediction excess risk then satisfies:
	\[
	R^{\text{SSM}}_k(\rho) = \frac{c_1(\rho)}{k} + o(1/k)
	\]
	where $c_1(\rho)$ is continuous in $\rho$ and bounded as $\rho \to 1$.
	
	In fact, as $\rho \to 1$, the system becomes nearly singular but remains observable. The constant $c_1(\rho)$ approaches a finite limit.
	
	\subsection*{The Separation}
	
	Now compare:
	\begin{itemize}
		\item ERM under misspecification: $R^{\text{ERM}}_k(\rho) \geq \dfrac{c_2(\rho)}{k}$ with $c_2(\rho) = \Theta\left(\dfrac{1}{1-\rho^2}\right)$
		\item SSM (Bayes-optimal): $R^{\text{SSM}}_k(\rho) = \dfrac{c_1(\rho)}{k} + o(1/k)$ with $c_1(\rho) = O(1)$
	\end{itemize}
	
	For any fixed $\rho > 0$, we have $c_2(\rho) > c_1(\rho)$. The ratio:
	\[
	\frac{c_2(\rho)}{c_1(\rho)} \geq \frac{\text{constant}}{1-\rho^2} > 1
	\]
	and indeed $\to \infty$ as $\rho \to 1$.
	
	\subsection*{Why Transformers are Stuck with ERM Behavior}
	
	The Transformer-as-gradient-descent analysis shows its forward pass solves:
	\[
	\hat{x}_{k+1} = \argmin_y \sum_{t=1}^k (y - x_t)^2 + \lambda y^2
	\]
	(or similar). This is exactly an ERM predictor under the i.i.d. noise assumption. Even with softmax attention, it is still minimizing a weighted empirical loss over the context. It never builds an explicit state estimate or models noise correlations—it just pools observations.
	
	Thus, for the correlated noise task, the best a Transformer can do is captured by $R^{\text{ERM}}_k(\rho)$, while the selective SSM achieves $R^{\text{SSM}}_k(\rho)$.
	
	\subsection*{Conclusion}
	
	We have shown a concrete task family where:
	\begin{enumerate}
		\item The optimal Bayesian predictor (learned by SSM) has excess risk $\propto 1/k$
		\item The best ERM predictor under misspecification (including Transformers) has excess risk also $\propto 1/k$
		\item But the constants differ by a factor that grows arbitrarily large as noise correlation increases
	\end{enumerate}
	
	This is not about convergence rates—both are $O(1/k)$. It is about statistical efficiency: the constant matters, and the SSM's constant is strictly better, provably.
	
	The gap comes from modeling versus ignoring temporal dependencies. That is the essence of the separation.
	
\section*{Appendix B: Additional Experimental Details}

\subsection*{B.1 Synthetic Experiment I: Extended Setup}

\subsubsection*{Prior Distribution $\pi(\theta)$ Details}

The prior over LG-SSM parameters $\theta = (A, C, Q, R)$ is defined as follows. The state transition matrix $A_\tau \in \mathbb{R}^{4 \times 4}$ is generated by sampling a random orthogonal matrix $U$ via the QR decomposition of a matrix with Gaussian noise entries and scaling its eigenvalues uniformly in the interval $[0.7, 0.95]$ to ensure stability. The final matrix is constructed as $A_\tau = U \Lambda U^\top$. The observation matrix $C_\tau \in \mathbb{R}^{2 \times 4}$ has its entries drawn independently and identically from a standard Gaussian distribution, $\mathcal{N}(0, 1)$. The state noise covariance matrix $Q_\tau \in \mathbb{R}^{4 \times 4}$ is constructed as $Q = V \Sigma_Q V^\top$, where $V$ is a random orthogonal matrix and $\Sigma_Q$ is a diagonal matrix. The diagonal entries of $\Sigma_Q$ are $\exp(u_i)$, with $u_i$ sampled uniformly from the interval $(\log 0.1, \log 1.0)$. The observation noise covariance matrix $R_\tau \in \mathbb{R}^{2 \times 2}$ is constructed similarly as $R = W \Sigma_R W^\top$, where $W$ is random orthogonal and $\Sigma_R$ is diagonal with entries $\exp(v_i)$, where $v_i \sim \mathcal{U}(\log 0.05, \log 0.5)$.

\subsubsection*{Model Architecture Specifications}

The simplified Selective SSM has a hidden dimension of $n=16$. Its selective projection network computes $s_t = \text{SiLU}(U x_t + b)$, where $U \in \mathbb{R}^{n \times m}$ and $b \in \mathbb{R}^n$. The selective linear layers are defined as $\overline{A}_t = W_A s_t + b_A$ and $\overline{B}_t = W_B s_t + b_B$, with weight matrices $W_A, W_B \in \mathbb{R}^{n \times n}$. The output layer is a linear projection from the hidden state $h_t$ to the prediction $\hat{x}_{t+1} \in \mathbb{R}^m$. The total number of parameters is approximately 4.2K.

The Linear Transformer baseline consists of a single layer with an embedding dimension of $d=16$. It employs linear attention: $\text{Attn}(Q, K, V) = \frac{Q K^\top}{d} V$, where $Q = W_q X$, $K = W_k X$, and $V = W_v X$. No positional encoding is used, as linear attention is permutation-invariant; we rely solely on causal masking for the sequence order. The total parameter count is approximately 4.1K, matching that of the SSM.

\subsubsection*{Training Configuration}

Training used the AdamW optimizer with hyperparameters $\beta_1=0.9$ and $\beta_2=0.999$. The learning rate was set to $3 \times 10^{-4}$ and followed a cosine decay schedule. The batch size was 128 tasks per meta-training step, with each task comprising a sequence of 128 time steps. The context length $k$ was fixed at 32 for both training and evaluation. The loss function was the Mean Squared Error between the predicted $\hat{x}_t$ and the true $x_t$. Models were trained for 50,000 steps, which corresponds to $N = 50,000 \times 128 = 6.4$ million task samples.

\subsubsection*{Oracle Computation}

The Bayes-optimal predictor $f^*_k(\mathcal{C}_k)$ is approximated via Monte Carlo integration using the formula:
\[
f^*_k(\mathcal{C}_k) \approx \frac{1}{S} \sum_{s=1}^S \mathbb{E}[x_{k+1} | \mathcal{C}_k, \theta^{(s)}].
\]
Here, $\theta^{(s)} \sim p(\theta | \mathcal{C}_k)$ are samples drawn from the posterior using a Hamiltonian Monte Carlo chain, configured with a warm-up period of 1000 steps, followed by 5000 samples, and a thinning factor of 5. For each sampled parameter $\theta^{(s)}$, the conditional expectation $\mathbb{E}[x_{k+1} | \mathcal{C}_k, \theta^{(s)}]$ is computed exactly via the Kalman filter. For the final evaluation, $S=1000$ samples were used.

\subsection*{B.2 Synthetic Experiment II: Colored-Noise Task}

\subsubsection*{Task Generation Details}

For the correlated-noise experiment related to Theorem~2, the data-generating process is defined as follows. The latent signal is $z_t = 0.9 z_{t-1} + w_t$, where $w_t \sim \mathcal{N}(0, 1)$. The observation noise is $v_t = \rho v_{t-1} + \sqrt{1-\rho^2} \epsilon_t$, with $\epsilon_t \sim \mathcal{N}(0, 1)$ and a correlation parameter $\rho = 0.95$. The final observation is $x_t = z_t + v_t$. For meta-training, 10,000 tasks were generated, with $\rho$ sampled uniformly from $\mathcal{U}(0.9, 0.99)$ for each task. The held-out evaluation tasks use a fixed $\rho=0.95$.

\subsubsection*{Evaluation Over Context Length}

Models were evaluated on held-out tasks across a range of context lengths: $k \in \{8, 16, 32, 64, 128, 256, 512\}$. For each value of $k$, we generated 100 test sequences of length $k+1$ and computed the average Mean Squared Error over the final prediction step. To estimate the asymptotic decay rates of the error, we performed a linear regression of the form $\log(\text{MSE}) = \beta \log(k) + c$ using data points for $k \geq 32$.

\subsubsection*{Non-Selective SSM Ablation}

The non-selective SSM ablation replaces the input-dependent parameters $\overline{A}_t$ and $\overline{B}_t$ with fixed parameters $\overline{A}$ and $\overline{B}$ that are learned during meta-training. All other components, including the hidden dimension and the training procedure, remain identical to the selective variant.

\subsection*{B.3 Character-Level Markov Text Experiment}

\subsubsection*{HMM Specification}

The Hidden Markov Model used has 50 hidden states representing topics. The vocabulary consists of 100 printable ASCII characters. The transition matrix $A_\tau \in \mathbb{R}^{50 \times 50}$ is constructed by drawing each row from a Dirichlet distribution with concentration parameter $\alpha=0.1$, which encourages sparse transitions and simulates a slow topic drift. The emission matrix $C_\tau \in \mathbb{R}^{50 \times 100}$ is similarly constructed, with each row drawn from a Dirichlet($\alpha=0.05$) distribution, encouraging topic-specific character distributions. The prior $\pi(\tau)$ samples a new pair $(A_\tau, C_\tau)$ for each task.

\subsubsection*{Model Details}

The small Mamba model has an embedding dimension of 64. It uses a single Mamba block with a hidden dimension $d=64$ and a selective state dimension $n=16$. The output is a linear projection to the vocabulary size followed by a softmax activation. The total parameter count is approximately 85K.

The small Transformer baseline also has an embedding dimension of 64. It consists of one attention layer with 4 attention heads, each with a head dimension of 16. The feed-forward layer has a dimension of 128. Learned positional embeddings are used. The total number of parameters is approximately 82K, matched closely to the Mamba model.

\subsubsection*{Training and Evaluation}

For training, we used 50,000 tasks, each a sequence of 512 characters generated from a random HMM. The batch size was 64 sequences. We employed the AdamW optimizer with a learning rate of $10^{-3}$ and a cosine decay schedule. For evaluation, at each context length $k$, we generated 1000 test sequences and computed the next-character top-1 prediction accuracy.

\end{document}